\newif\ifanonymized
\newcommand{\anonymized}[1]{%
  \ifanonymized
    Anonymous%
  \else
    #1%
  \fi
}
\newtheorem{theorem}{Theorem}
\renewcommand{\mathbf}[1]{%
  \ifx#1x x\else \textbf{#1}\fi
}
\begin{document}


\twocolumn[
  \icmltitle{\emph{gp2Scale}: A Class of Compactly Supported \\
       Non-Stationary Kernels and Distributed \\ 
       Computing for Exact Gaussian Processes on 10 Million Data Points}



  \icmlsetsymbol{equal}{*}

  \begin{icmlauthorlist}
    \icmlauthor{Marcus M. Noack}{equal,lbnl1}
    \icmlauthor{Mark D. Risser}{lbnl2}
    \icmlauthor{Hengrui Luo}{rice,lbnl1}
    \icmlauthor{Vardaan Tekriwal}{lbnl1,ucb}
    \icmlauthor{Ronald J. Pandolfi}{lbnl1}
  \end{icmlauthorlist}

  \icmlaffiliation{lbnl1}{Applied Mathematics and Computational Research Division, Lawrence Berkeley National Laboratory, 1 Cyclotron Rd, Berkeley, CA 94720}
  \icmlaffiliation{lbnl2}{Climate and Ecosystem Sciences Division, Lawrence Berkeley National Laboratory, 1 Cyclotron Rd, Berkeley, CA 94720}
  \icmlaffiliation{ucb}{UC Berkeley, 1 Sproul Hall, Berkeley, CA 94720}
  \icmlaffiliation{rice}{Department of Statistics, Rice University, Houston, TX, 77005}

  \icmlcorrespondingauthor{Marcus M. Noack}{MarcusNoack@lbl.gov}

  \icmlkeywords{Gaussian Process, HPC, Non-Stationary Kernels, Compact Support}

  \vskip 0.3in
]
\printAffiliationsAndNotice{}

\begin{abstract}
    Despite a large corpus of recent work on scaling up Gaussian processes, a stubborn trade-off between computational speed, prediction, and uncertainty quantification accuracy, and customizability persists. This is because the vast majority of existing methodologies exploit various levels of approximation that lower accuracy and limit the flexibility of kernel and noise-model designs --- an unacceptable drawback at a time when expressive non-stationary kernels are on the rise in many fields. Here, we propose a methodology we term \emph{gp2Scale} that scales exact Gaussian processes to more than 10 million data points without relying on inducing points, kernel interpolation, or neighborhood-based approximations, and instead leverages the existing capabilities of a GP: its kernel design. Highly flexible, compactly supported, and non-stationary kernels lead to the identification of naturally occurring sparse structure in the covariance matrix, which is then exploited to compute the linear system solution and the log-determinant for training. We demonstrate our method's functionality on several real-world datasets and compare it with state-of-the-art approximation algorithms. Although we show superior approximation performance in many cases, the method's real power lies in its agnosticism toward arbitrary GP customizations --- core kernel design, noise, and mean functions --- and the type of input space, making it optimally suited for modern Gaussian process applications. 
\end{abstract}

\section{Introduction}
Gaussian process (GP) regression is a general-purpose tool for stochastic function approximation from data. A GP is characterized by a prior normal distribution $p(\mathbf{f})$ over function values $\mathbf{f}=[f(\mathbf{x}_1), f(\mathbf{x}_2),...,f(\mathbf{x}_{|\mathcal{D}|})]^T$ that is defined by a mean $\mathbf{m}=[m(\mathbf{x}_1),...,m(\mathbf{x}_{|\mathcal{D}|})]^T$, we assume zero mean without loss of generality, and a covariance $\mathbf{K}=Cov(\mathbf{f},\mathbf{f}), \mathbf{K} \in \mathbb{R}^{|\mathcal{D}| \times |\mathcal{D}|}$. The function $f(\mathbf{x})$ is considered the ground-truth and data-generating function. Data $\mathcal{D}=\{(\mathbf{x}_i, y_i)\}_{i=1}^{|\mathcal{D}|}$ is thought of to have been generated through the functional relationship $y_i=f(\mathbf{x}_i) + \epsilon(\mathbf{x}_i)$, where $\epsilon(\mathbf{x}_i)$ is normally distributed noise. We will refer to the set of all $\mathbf{x}_i$ as $\mathbf{X}$ and the vector of all measurements $y_i$ as $\mathbf{y}$. Training a GP involves evaluating the log marginal likelihood $p(\mathbf{y})$, which requires calculating a linear system solution $\mathbf{K} \mathbf{a}=\mathbf{y}$, for a vector $\mathbf{a}$, and the log-determinant $\log(|\mathbf{K}|)$. Both calculations scale $\mathcal{O}(|\mathcal{D}|^3)$, which has led to the widely held belief that GPs can only be applied to moderately sized datasets of <10000 points \citep{williams2006gaussian}. In addition, storing the covariance matrix scales at $\mathcal{O}(|\mathcal{D}|^2)$, which is often even more problematic than the time scaling because it imposes a hard limit on a given computing architecture. 

The fundamental problem of scaling Gaussian processes (GPs) stems from the widely held view that the covariance matrix is inherently dense. Sparsity is entirely derived from the chosen kernel function; even if many function values $\mathbf{f}$ are uncorrelated ($Cov(f_i,f_j)=0$), most kernel functions would be unable to assign a zero covariance, rendering the covariance matrix dense by construction. Existing kernels that are compactly supported, and therefore able to return zero, can do so only in a stationary, purely distance-related manner, which limits accuracy; this is often referred to as covariance tapering \citep{Furrer2006, Kaufman2008}. The missing scalability of Gaussian processes comes down to a model misspecification problem; providing kernels with non-stationary compact support allows the GP to discover naturally occurring sparsity in the dataset; sparse linear algebra then leads to more favorable scaling --- a methodology we call \emph{gp2Scale}.

This basic principle has been applied before by \cite{noack2023exact} to demonstrate a proof-of-concept run on a 5-million-point climate dataset. However, that early work focused on a single kernel design, did not compare with other scalable GP methodologies, and did not provide a comprehensive software framework. In this paper, we officially announce \emph{gp2Scale} as a methodology and software to scale exact GPs to millions of data points. We also extend the methodology by \cite{noack2023exact} by defining a new class of compactly supported non-stationary kernels and performing rigorous comparisons to several state-of-the-art approximation methods. To set expectations: we do not expect that our exact GP will be as fast as some of the approximate methods, and we will require more computing power --- after all, we are running an exact GP --- but we will show that large-scale exact GPs are feasible and lead to competitive or better accuracy while preserving the natural flexibility of a standard GP.

\emph{gp2Scale}, in a nutshell, has three components: (1) A flexible non-stationary compactly supported kernel function that allows natural sparsity to be discovered --- not induced like in most approximate methods, (2) a distributed-computation framework that allows quick computation of the covariance matrix, and (3) a customized block-Metropolis-Hastings Markov-chain Monte Carlo (BMH-MCMC) that allows quick convergence, natural regularization, and user-friendliness.

\section{Related Work}
Past work in this field can be divided into two branches: \emph{exact} and \emph{approximate} methods. This distinction is driven by whether the full dataset and the associated covariances are considered or not; of course, any numerical procedure is approximate by nature, but numerical approximations are considered to lie within the exact GP category. The most notable work on exact GPs was done in \citep{wang2019exact}, where the authors scaled exact GPs to over 1 million data points by avoiding the log-determinant entirely and instead taking advantage of fast conjugate-gradient iterations on GPUs to calculate the gradient of the log marginal likelihood for a local optimization. The method may be sub-optimal for non-standard kernels with many hyperparameters because, one, the gradient for each hyperparameter has to be computed, which will consume time, and two, for those kernels, the log marginal likelihood is strongly non-convex, exhibiting many local optima that render a purely local optimization ineffective. 

The vast majority of work on GP scalability has focused on finding approximate solutions. Two broad families have emerged: inducing-point methods, which introduce a smaller set of points to represent the latent function, and local approximation methods, which exploit neighborhood structures for reduced computation. 

Inducing-point methods introduce $M \ll |\mathcal{D}|$ pseudo-inputs $\mathbf{Z}$ to construct a lower-rank approximation of the covariance matrix. Formally, one may approximate $\mathbf{f}$ by conditioning on $\mathbf{u} = f(\mathbf{Z})$, where $\mathbf{u} \sim \mathcal{N}(0, \mathbf{K}(\mathbf{Z}, \mathbf{Z}))$. A common approach is to exploit the relationship $\mathbf{f} \approx \mathbf{K}(\mathbf{X}, \mathbf{Z}) \mathbf{K}(\mathbf{Z}, \mathbf{Z})^{-1} \mathbf{u}.$ Within this framework, Sparse Variational Gaussian Processes (SVGP, \citep{hensman2013gaussian}) stand out for their flexibility, as they can be trained by maximizing the Evidence Lower Bound (ELBO), thereby accommodating a wide range of kernels. Sparse Gaussian Process Regression (SGPR, \citep{snelson2005sparse}) likewise employs inducing points, but optimizes the marginal likelihood directly. SVGP with Contour Integral Quadrature (SVGP-CIQ, \citep{pleiss2020scalable}) improves matrix inversion for Matérn kernels through numerical contour integration, but hinges on this kernel class; while \citep{luo2022sparse} extends the sparsification methodology to a fully Bayesian additive setting (SAGP). Scalable Kernel Interpolation (SKI, \citep{wilson2015kernel,wilson2015thoughts}) and its extension, Kernel Interpolation for Scalable Structured GPs (KISS-GP, \citep{wilson2015kernel}), arrange inducing points on grids to exploit structured kernel matrices and enable efficient matrix-vector multiplications. 

Local approximation strategies such as Nearest-Neighbor Gaussian Processes (NNGP, \citep{datta2016hierarchical}), Variational Nearest Neighbor Gaussian Processes (VNNGP, \citep{wu2022variational}), and the Vecchia approximation \citep{vecchia1988estimation, Katzfuss2021} approach the covariance structure by examining local subsets of the data, thereby reducing both computational cost and memory demand. In comparison to inducing-point-style methods, local approximation strategies utilize sparse precision matrices rather than sparsity in the covariance matrix and take advantage of selective conditioning on a set of neighboring points.

Selecting the most informative subset of these methods for comparison benefits from considering factors such as scalability, flexibility, approximation accuracy, and implementation complexity. SVGP, despite its reliance on variational inference, is well-established as a general-purpose inducing-point method that gracefully handles unstructured data and multiple likelihoods. SGPR, though occasionally tighter in its regression-specific marginal likelihood optimization, offers fewer advantages in broader GP applications. SVGP-CIQ, with its reliance on Matérn kernels, does not match SVGP’s broader kernel compatibility. SKI emerges as a particularly strong representative of structured interpolation because it efficiently handles moderately-sized datasets without overly restrictive assumptions, apart from its dimension. Among local approaches, NNGP remains the canonical nearest-neighbor strategy for large-scale spatial data, offering significant computational gains by localizing predictions. VNNGP combines the inducing-point method from SVGP with sparsification of the covariance matrix, similar to NNGP. Vecchia’s sequential factorization covers a wide range of spatial-data scenarios and remains tractable if the data can be sorted or grouped logically.

In light of these considerations, we compare four methods to our proposed \emph{gp2Scale} that collectively capture the essential design principles in GP scalability: SVGP as a general variational inducing-point framework, VNNGP as a paradigmatic hybrid method, taking advantage of inducing points and a notion of locality, SKI as a structured interpolation approach that is quick across low-dimensional datasets, and Vecchia as a flexible local approximation leveraging conditional independence. These four approaches span the core strategies --- variational approximations, local factorizations, and kernel interpolation --- while retaining broad applicability and interpretability for large-scale Gaussian process inference.

\paragraph{Contributions}
In this work, we propose \emph{gp2Scale}: a new class of non-stationary compactly supported kernels that, together with HPC distributed computing and a tailored block-MCMC, allows us to scale exact GPs to millions of data points, preserving a GP's original accuracy and flexibility. We extend the framework to non-Euclidean input domains and provide comprehensive comparisons with state-of-the-art approximation methods across mid- and large-scale benchmarks. The core premise is that the GP covariance matrix is not naturally dense, but it is destined to be due to traditional kernel designs. Giving non-stationary kernels extra flexibility and compact support will allow the training to uncover sparse structure in the data, which translates into a sparse covariance matrix $\mathbf{K}$, which in turn, leads to faster linear solves and log-determinant calculations, all while the GP stays exact and maintains all of its natural flexibility regarding noise and kernel functions. In particular, since the method is based on flexible non-stationary kernels, it is agnostic to user-defined kernel designs or abstract input spaces. 

\section{Background}
We consider a Gaussian prior $p(\mathbf{f})=\mathcal{N}(\mathbf{m}, \mathbf{K})$, where $\mathbf{m}=m(\mathbf{x}_i)~\forall i$ is the prior mean and $\mathbf{K}=k(\mathbf{x}_i, \mathbf{x}_j)$. $k(\mathbf{x}_i, \mathbf{x}_j)$ is the kernel or covariance function. We further consider, without loss of generality, a normal likelihood $p(\mathbf{y}|\mathbf{f})=\mathcal{N}(\mathbf{f}, \mathbf{V})$, where $\mathbf{V}$ is some noise matrix. 
Following Bayes' theorem, the log marginal likelihood can be derived as
\begin{align}
    &\log(p(\mathbf{y} | \phi)) \propto \nonumber \\ 
    & -\frac{1}{2} (\mathbf{y}-\mathbf{m}(\phi))^T (\mathbf{K}(\phi)+\mathbf{V}(\phi))^{-1} (\mathbf{y}-\mathbf{m}(\phi)) \nonumber \\
    & - \frac{1}{2} \ln(|\mathbf{K}(\phi)+\mathbf{V}(\phi)|) ,
\end{align}
where $\phi$ is a set of hyperparameters. Going forward, we can ignore the prior mean $\mathbf{m}=m(\mathbf{x}_i)$ and the noise matrix $\mathbf{V}$ without loss of generality. Training a GP means sampling from or maximizing $\log(p(\mathbf{y} | \phi))$ with respect to the hyperparameters $\phi$. The problem of interest arises from the fact that $\mathbf{K} \in \mathbb{R}^{|\mathcal{D}| \times |\mathcal{D}|}$, where $|\mathcal{D}|$ is the size (cardinality) of the dataset. The prior covariance matrix $\mathbf{K}$ has to be stored and inverted (or, equivalently, a linear system solved). This leads to $\mathcal{O}(|\mathcal{D}|^2)$ storage complexity and $\mathcal{O}(|\mathcal{D}|^3)$ time complexity. In addition, calculating $\log(|\mathbf{K}|)$ also scales approximately with complexity $\mathcal{O}(|\mathcal{D}|^3)$.

Once hyperparameters are found, the posterior probability $f(x^*) = f^*$ can be calculated as
\begin{align}\label{eq:posterior}
\small
p(f^*|\mathbf{y})=\mathcal{N}\Big(&m(\mathbf{x}^*) + k(\mathbf{X},\mathbf{x}^*)^T \mathbf{K}^{-1}  (\mathbf{y}-\mathbf{m}),~ \nonumber \\ 
& k(\mathbf{x}^*,\mathbf{x}^*) - k(\mathbf{X},\mathbf{x}^*)^T \mathbf{K}^{-1} k(\mathbf{X},\mathbf{x}^*)\Big), 
\end{align}
where $\mathbf{X}$ is the matrix containing all $\mathbf{x}_i~\forall i \in \{1,2,...,|\mathcal{D}|\}$. If $\mathbf{K}^{-1}$ was stored during training, this is a quick operation, but this is rarely the case because the inversion is generally avoided due to accrued inaccuracies. Otherwise, this operation will also require solving a linear system. 

Both the unfavorable storage and time complexity of training and prediction traditionally limit the application of GPs to small and medium-sized datasets. Although approximation methods exist and are often applied, they typically affect the GP's accuracy and, worse still, limit its customization flexibility --- arbitrary non-stationary kernels and heteroscedastic parametric noise models.

\section{Method}
To recap, the core idea motivating the proposed method \emph{gp2Scale} is that GPs may well scale to large datasets if the kernel design is customized to allow the discovery of a sparse covariance matrix. Therefore, the kernel has to possess the ability to return zero when two function values are deemed independent. This can be achieved through compact support of the kernel functions. In the stationary case, this is commonly referred to as covariance tapering in the literature \citep{zhang2008covariance, Kaufman2008, Furrer2006} and has been widely criticized for excluding \emph{far-field interactions} --- those not determined solely by proximity under a distance metric. In stationary datasets lacking far-field dependencies, covariance tapering is a clever and efficient method to scale GPs while maintaining exactness. Recently, non-stationary kernels have gained popularity, offering flexible ways to encode distance-unrelated (far-field) dependencies. Following this logic, if we can equip a GP with a flexible, non-stationary, compactly supported kernel, we may recover a sparse covariance matrix while accurately modeling near and far-field interactions, thereby enabling accurate predictions and uncertainty quantification. In the following, we introduce a set of kernels that possess the required properties: flexibility, non-stationarity, and compact support. 

\subsection{Wendland-Style Kernels via the Product of Kernels}
Wendland kernels \citep{wendland1995piecewise} are a particularly prominent class of stationary, compactly supported kernels. In our implementation and in the experiments, we will heavily rely on the particular Wendland kernel
\begin{equation}
\small
\label{eq:wendland}
k_{\mathcal{W}}(\mathbf{x}_i, \mathbf{x}_j; r_0)=
\begin{cases}
\begin{aligned}
&\left(1-\frac{\|\mathbf{x}_i-\mathbf{x}_j\|}{r_0}\right)^8  \times \Bigg(
32\left(\frac{\|\mathbf{x}_i-\mathbf{x}_j\|}{r_0}\right)^3 \\
& + 25\left(\frac{\|\mathbf{x}_i-\mathbf{x}_j\|}{r_0}\right)^2 + \frac{8\|\mathbf{x}_i-\mathbf{x}_j\|}{r_0}
+ 1
\Bigg)  \\
&\text{if } \|\mathbf{x}_i-\mathbf{x}_j\| < r_0, \\[0.5ex]
\end{aligned} \\
0, \text{otherwise.}
\end{cases}
\end{equation}
and small variations thereof, where $r_0$ is the radius of support, and $||\cdot||$ is the Euclidean norm. Since any product of symmetric positive semi-definite functions --- the core property all kernels have to be endowed with ---  is symmetric and positive semi-definite, we can freely formulate kernels of the type
\begin{equation}
    k(\mathbf{x}_i,\mathbf{x}_j) = k_{nonstat}(\mathbf{x}_i,\mathbf{x}_j) k_{\mathcal{W}}(\mathbf{x}_i,\mathbf{x}_j)
\end{equation}
with arbitrary $k_{nonstat}$ to define non-stationary compactly supported kernels. This particular kernel, however, will mute far-field interactions outside of $k_\mathcal{W}$'s support.

\subsection{A Non-Stationary Wendland Kernel via Convolution}
To derive a non-stationary extension of the stationary Wendland kernel \eqref{eq:wendland}, we take advantage of the fact that the convolution of two kernels \begin{equation}
    k_c(\mathbf{x}_i, \mathbf{x}_j) = \int_{\mathbb{R}^n} k(\mathbf{x}_i, \mathbf{x}) k(\mathbf{x}_j, \mathbf{x}) d\mathbf{x}
\end{equation}
results in a valid kernel \citep{paciorek2006spatial, risser2020bayesian, higdon1999non}. We can use this fact to propose the kernel
\begin{align} \label{eq:wendconv}
\small
    &k(\mathbf{x}_i,\mathbf{x}_j) = \nonumber \\
    &\sigma_s(\mathbf{x}_i) \sigma_s(\mathbf{x}_j) \frac{\left|\boldsymbol{\Sigma}(\mathbf{x}_i)\right|^{1/4}\left|\boldsymbol{\Sigma}(\mathbf{x}_j)\right|^{1/4}}{\left|\frac{\boldsymbol{\Sigma}(\mathbf{x}_i) + \boldsymbol{\Sigma}(\mathbf{x}_j)}{2} \right|^{1/2}} k_{\mathcal{W}}\left({\sqrt{Q(\mathbf{x}_i,\mathbf{x}_j)}}\right),
\end{align}
where $\sigma_s$ is the non-constant signal standard deviation, $\boldsymbol{\Sigma}(\mathbf{x})$ is the anisotropic non-constant length scale as a function on the input set, here $|\cdot|$ denotes the determinant, $k_{\mathcal{W}}$ is a Wendland kernel (for instance the one defined in \ref{eq:wendland}), and $Q(\mathbf{x}_i,\mathbf{x}_j)={(\mathbf{x}_i-\mathbf{x}_j)^\top \left( \frac{\boldsymbol{\Sigma}(\mathbf{x}_i) + \boldsymbol{\Sigma}(\mathbf{x}_j)}{2}\right)^{-1}(\mathbf{x}_i-\mathbf{x}_j)}$. Although $Q(\mathbf{x}_i,\mathbf{x}_j)$ is not a valid distance metric (it violates the triangle inequality), this kernel is positive semi-definite, as demonstrated by \cite{paciorek2006spatial}. This construction yields a highly flexible kernel, although far-field interactions are still neglected outside of the support of $k_{\mathcal{W}}$.

\subsection{The Bump-Function Kernel}
To include far-field interactions, we take advantage of so-called bump functions
\begin{equation} \label{eq:bump}
\small
b(\mathbf{x}, \mathbf{x}_i) = \left\{ \begin{array}{ll}
   a \exp\left\{ \beta \left[ 1 - (1-\frac{|\mathbf{x}-\mathbf{x}_i|^2}{r^2})^{-1} \right] \right\}  & \text{if } |\mathbf{x}-\mathbf{x}_i|^2 < r  \\
   0  & \text{else},
\end{array} \right. 
\end{equation}
where $a$ is the amplitude, $\beta$ is an optional shape parameter, and $r$ is the bump function radius. 
The bump function is compactly supported but not a valid kernel. However,
$
    k(\mathbf{x}_i,\mathbf{x}_j) = g(\mathbf{x}_i) g(\mathbf{x}_j), 
$
is indeed a valid kernel for any function $g$ (including bump functions, see Theorem \ref{theorem:ff} in Appendix \ref{app:A}) and a convenient way to create non-stationary kernels with flexible signal variances when combined by-product with a stationary kernel \citep{noack2022advanced}. If we now consider $g$ to be the bump function in Equation \eqref{eq:bump}, we recover a non-stationary and compactly supported kernel. This kernel lacks flexibility because $g(\mathbf{x}_i) g(\mathbf{x}_j)$ yields a rank 1 Gram matrix (for any $g$) and can therefore only turn data points ``on'' or ``off''; for $g=b$ this also can inadvertently turn off covariances along the diagonal when pairs of points are located outside of the support of any bump functions, leading to nonphysical behavior in which a data point is not correlated with itself. Both issues can be avoided by considering the sum
\begin{align}
    k(\mathbf{x}_i,\mathbf{x}_j) = g(\mathbf{x}_i) g(\mathbf{x}_j) + \sigma_s^2 k_{\mathcal{W}}(\mathbf{x}_i,\mathbf{x}_j),
\end{align}
where local interactions are now preserved, and the associated Gram matrix has full rank. To flexibly model far-field interactions, we define
$
    g(\mathbf{x})=\sum_p^P b(\mathbf{x}, \mathbf{x}_p)
$
which means 
$
g(\mathbf{x}_i) g(\mathbf{x}_j) = \sum_{pq}^P b(\mathbf{x}_i, \mathbf{x}_p) ~b(\mathbf{x}_j, \mathbf{x}_q).
$
Far-field interactions are still rank 1 and can only enable or disable covariances for data points with respect to all other points (see Appendix \ref{A:cov_heatmap}). Higher-order interactions, however, can be modeled by considering $\sum_u^U g_u(\mathbf{x}_i) g_u(\mathbf{x}_j)$, which has rank $\leq U$. The implementation of those adaptations results in the kernel
\begin{align}
\label{eq:bump_kernel}
    k(\mathbf{x}_i,\mathbf{x}_j) = k_{core}(\mathbf{x}_i,\mathbf{x}_j) \Big( \sum_u^U g_u(\mathbf{x}_i) g_u(\mathbf{x}_j) + k_{\mathcal{W}}(\mathbf{x}_i,\mathbf{x}_j) \Big),
\end{align}
where we included the optional product with an arbitrary user-defined domain-customized core kernel $k_{core}$.
This is the first kernel in the class that can flexibly model far-field interactions, since the bump functions can be located anywhere in the input space. It also allows for higher-order interactions for $U>1$: a point set A might be correlated with a point set B but not with a point set C, while B and C are highly correlated. We included a graphical illustration of the covariance matrix for this kernel in Appendix \ref{A:cov_heatmap}. The shape, amplitudes, and radii of the bump functions can be held constant or be defined parametrically over the input domain, allowing control over the number of hyperparameters. The positions of the bump functions can be trained or fixed to a grid or to a subset of data point locations. An alternative option is to use clustering and position bumps at cluster centers. 


\subsection{Collapsing Bumps into Deltas}
The introduced bump-function-style non-stationary kernels enable far-field interactions as defined via the collective support of the bumps. While this is intuitive, it may lack flexibility and alter the covariance structure imposed by $k_{core}$ because of the bump function's smooth shape. In those cases, we may collapse the radii in Equation \eqref{eq:bump} to zero, which effectively results in deltas $\delta(\mathbf{x},\mathbf{x}_i)=1 ~\text{if}~\mathbf{x}=\mathbf{x}_i$ and $0$ otherwise.
This allows us to consider distance-unrelated non-stationary interactions very flexibly. More specifically, the kernel
\vspace{-4mm}
\begin{equation} \label{eq:delta_kernel}
     k_{d}(\mathbf{x}_i,\mathbf{x}_j) = \sum_p^{|\mathcal{D}|} g_p(\mathbf{x}_i) g_p(\mathbf{x}_j),
     \vspace{-3mm}
\end{equation}
where $g_p(\mathbf{x}) = \sum_{q}^{Q} \delta(\mathbf{x}, \mathbf{x}_q)$ defines a non-stationary and compactly supported kernel in which, in principle, each data point can choose to have non-zero covariances with an arbitrary set of other points. For numerical stability, this kernel can be added to a Mat\'ern kernel --- to make the covariance matrix diagonally dominant if needed by downstream solvers --- and multiplied by any user-specified domain-motivated core kernel. In this case, we obtain, similar to \eqref{eq:bump_kernel},
$
    k(\mathbf{x}_i,\mathbf{x}_j) = k_{core}  (k_\mathcal{W} + \sum_p^{|\mathcal{D}|} g_p(\mathbf{x}_i) g_p(\mathbf{x}_j)).
$
The large number of terms in the two sums might worry some readers about the required hyperparameters for this kernel, but rules for how to choose the positions of the deltas can often be encoded parametrically with very few hyperparameters. For example, one can mimic a flexible nearest-neighbor approach by introducing one additional hyperparameter: the radius of the neighbors, or the number of neighbors (which may vary as a function of the input space). Overall, using deltas instead of bumps allows for a mask that leaves the core kernel unchanged within the support, while effectively maximizing sparsity. On the flip side, differentiability is lost.

\subsection{An Extension for Small Length Scales in $k_{core}$}
The kernel in Equation \eqref{eq:bump_kernel} enables us to use bump functions as a type of mask that activates covariances specified by the non-stationary, locally interacting core kernel $k_{core}$. If the length scales of that core kernel are comparably small, far-field interactions will be muted. Separating local and far-field interactions via $k(\mathbf{x}_i,\mathbf{x}_j) = k_{core_1}k_{\mathcal{W}} + k_{core_2} \sum_p g_p(\mathbf{x}_i) g_p(\mathbf{x}_j)$ will allow far-field interaction to remain active, even for small $k_{core_1}$ length scales. $k_{core_1}$ and $k_{core_2}$, in this case, may only differ by their respective hyperparameters. An example kernel of this kind can be found in Appendix \ref{A:comb_kernel}.
This kernel circumvents the problem of muted far-field covariances by tying the bump-function kernel to a globally supported $k_{core_2}$, which is, however, only active within the support of the bump functions. This allows the influence of the bump functions to have a truly non-local component.

\subsection{Distributed Computing and Block MCMC}
With the class of flexible non-stationary and compactly supported kernels in place, the remaining building blocks of the \emph{gp2Scale} framework are the distributed-computing framework and the block-MCMC. Although both building blocks are crucial, they are much less involved compared to the kernel designs, which are the core methodological advancement. Since we are calculating the covariance matrix of an exact Gaussian process (GP), we need to distribute that calculation across as many nodes (ideally GPUs) as possible. The dataset is divided into similarly-sized blocks, which are then sent to the distributed workers for processing. There, the square covariance block-matrices for data-block pairs are computed and returned in sparse COO format, thereby reducing communication load. The matrix is assembled and cast to CSR format on the host node, where the solutions to $\mathbf{K}^{-1} \mathbf{y}$ and $\log(|\mathbf{K}|)$ are subsequently computed. See Appendix \ref{A:comp} for details. Assuming that the kernel identified a sufficiently sparse structure, both operations are very fast (see Appendix \ref{A:comp_times}). This makes the $\mathcal{O}(|\mathcal{D}|^2)$ scaling of the computation of the covariance matrix the most costly part. Note that one might mistakenly assume that only non-zero elements of the covariance matrix need to be computed, which would lead to better scaling. However, non-zero elements are not predetermined but a result of the kernel evaluation, so all covariance matrix entries must be computed. Assuming sufficient resources are available, the trivially parallelizable covariance matrix computation results in nearly constant wall-clock time. Given the nature of the hyperparameters and their independence, they can be sampled in separate blocks during MCMC, potentially leading to faster convergence. However, evaluating each MCMC block requires an additional solve and a log-likelihood evaluation.

\section{Experiments and Results}
In what follows, we compare the approximation performance of \emph{gp2Scale} to state-of-the-art community-accepted implementations of SVGP, VNNGP, SKI, and the Vecchia approximation. We acknowledge that extensions of some of these methods exist and might lead to slightly better performance. However, these same advances often create additional degrees of freedom, requiring us to make choices that affect the validity and generalizability of our tests. Since we are comparing an exact GP with approximations, the needed computing resources are vastly different, and comparing computing times and needed architecture is therefore meaningless. However, we report computational details in the Appendix for the benefit of the reader and future users. \emph{gp2Scale} is implemented and available to users as part of the open-source \anonymized{\emph{gpCAM}} Python package (\anonymized{\url{https://gpcam.lbl.gov/}}). We report the performance scores as means and standard deviations across 10 independent runs for all computational experiments, except for the final production run on 10 million data points, which was limited by computational constraints. We present the \emph{RMSE} and the \emph{CRPS} for the evaluation of prediction and uncertainty quantification abilities of the proposed algorithm. 
The results of the best-performing code are highlighted in bold. All run scripts, training, and test data can be found in the shared repository (see Reproducibility statement).

\subsection{A 1-Dimensional Synthetic Function}
We want to start our computational experiments with the 1-dimensional synthetic function 
\[
f_1(x) = \sin(5x) + \cos(20x) + 2(x - 0.4)^2 \cos(400x)
\]
for easy visual inspection of the solutions. In \emph{gp2Scale}, we used the kernel in Equation \ref{eq:wendconv}. At 2000 training data points, this example can be computed with a standard base-GP with Mat\`ern $\nu=3/2$ for comparison.
The result is shown in Figure \ref{fig:1d}. 
We also implemented the kernel for \emph{gp2Scale} in SVGP, but the default Adam optimizer failed to find a high-quality solution. 
The most striking takeaway from this simple test is that all approximate methods smooth out local characteristics of the complex and non-stationary test function (see Table \ref{tbl:1d}). The implementation details for the competing methods can be found in the Appendix and the shared repository.
 
\begin{figure*}
    \centering
    \includegraphics[width=0.9 \linewidth]{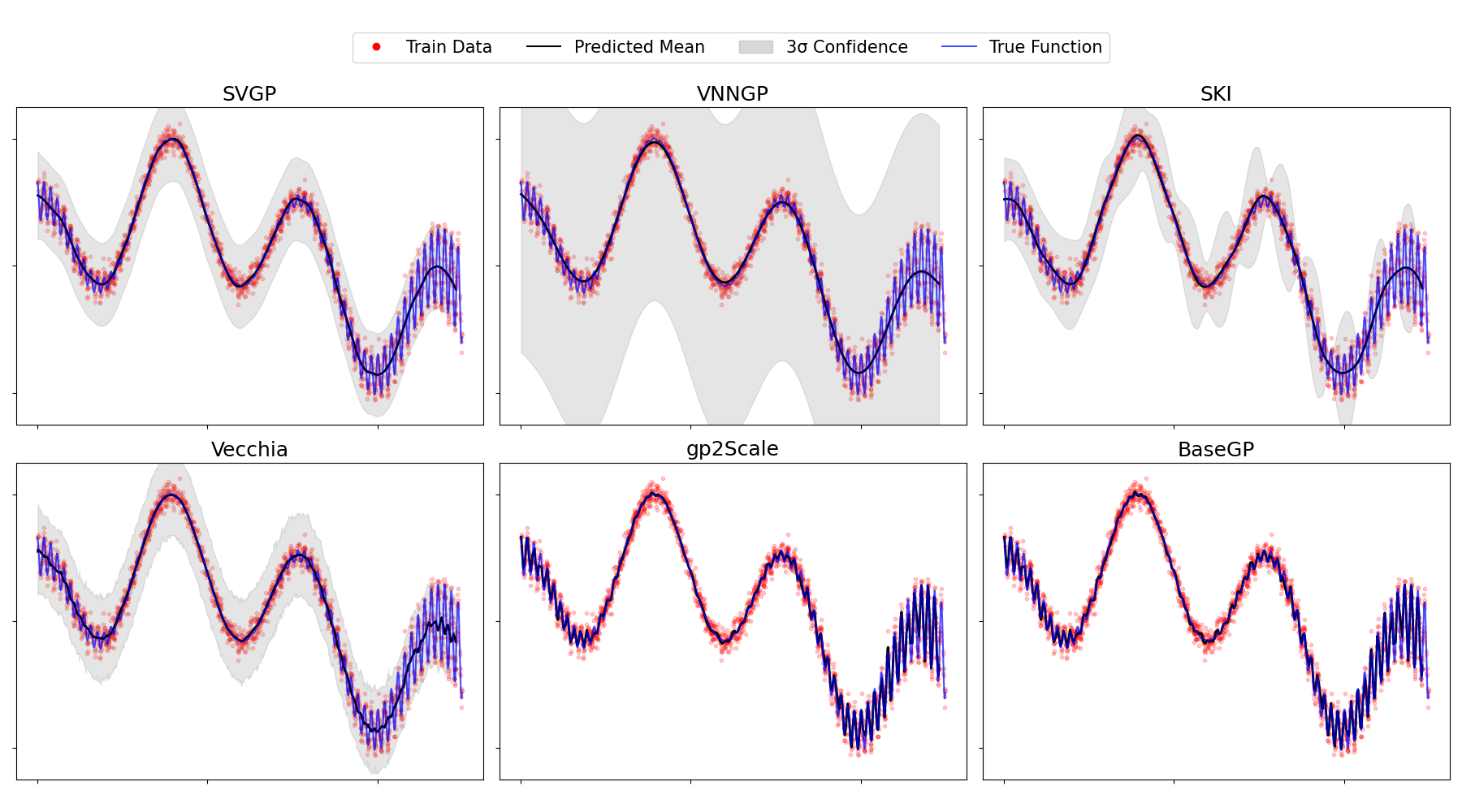}
    \put(-380,0){$x$}
    \put(-430,0){$0$}
    \put(-300,0){$1$}
    \put(-445,55){$y$}
    \put(-450,15){$-2$}
    \put(-445,90){$2$}
    \caption{Approximation performance of VNNGP, SVGP, SKI, Vecchia, \emph{gp2Scale}, and a base regular GP for comparison, which is, in most examples we consider, computationally prohibitive. The ground truth is depicted in blue; red dots represent the training data, and the posterior mean is shown in black. VNNGP, SVGP, and SKI oversmooth and cannot adequately recover local oscillations. The Vecchia approximation shows some local oscillation, although degraded. 
    \emph{gp2Scale} best preserves local variations and sharp transitions, and offers reliable uncertainty quantification (UQ) very similar to the base regular GP. Note that for all figures the posterior standard deviation of $p(\mathbf{f})$ is displayed, not of $p(\mathbf{y})$, leading to correct, vanishingly small uncertainties for the regular GP and \emph{gp2Scale}, which is expected given the dense data distribution. The competing methods are clearly overestimating the uncertainty. The noise, in this example, is homoscedastic and constant across the tested methodologies.}
    \label{fig:1d}
\end{figure*}

\subsection{Topography}
In this example, we train a GP regressor on 20,000 data points representing the United States' topography. The test set comprises 5000 randomly chosen data points. While still low-dimensional, this dataset is challenging due to its high degree of non-stationarity. The dense sampling leads to the identification of substantial sparsity in the covariance matrix. The non-stationarity in the data requires the use of a customized kernel. Since \emph{gp2Scale} is agnostic to the core user-defined kernel design, we observe superior performance using kernel \eqref{eq:wendconv} and \eqref{eq:bump_kernel} (see Table \ref{tbl:topo}). The implementation details for the competing methods can be found in the Appendix and the shared repository.  

\subsection{8-Dimensional California Housing Dataset}
This dataset comprises 20,000 data points in eight dimensions, along with their corresponding labels, which are housing prices from California (\url{https://www.dcc.fc.up.pt/~ltorgo/Regression/cal_housing.html}). The test data set contains 640 points. The complexity of this dataset lies in the fact that the high-dimensional space induces data scarcity, making the discovery of naturally occurring sparsity in the covariance structure more difficult. \emph{gp2Scale} was executed using kernel \eqref{eq:wendland} with axially-anisotropic distances (Automated Relevance Determination, ARD \citep{williams1995gaussian}) and performed competitively (see Table \ref{tbl:ca}) nonetheless. SKI used an additive kernel because the dataset's dimensionality exceeds the recommended value of 4 \citep{wilson2015kernel}. 
The implementation details for the competing methods are available in the Appendix and the shared repository.

\begin{table*}[t]
  \caption{1-Dim. Synthetic Experiment.}
  \label{tbl:1d}
  \label{1d}
  \centering
  \begin{tabular}{lllllll}
    \toprule
    Name            & SVGP     & VNNGP & SKI & Vecchia & Base GP & \emph{gp2Scale}~w/\eqref{eq:wendconv} \\
    \midrule
    RMSE         &  0.19$\pm$5.4e-4 &  0.20$\pm$8.9e-5 & 0.19$\pm$1.3e-4  & 0.20$\pm$0.025 &  0.109$\pm$4.4e-4 & \textbf{0.107$\pm$9.0e-5} \\
    CRPS         &  0.11$\pm$3.7e-4 &  0.21$\pm$1.6e-5 & 0.11$\pm$2.4e-4  & 0.11$\pm$0.013  &  0.07$\pm$11.7e-5 & \textbf{0.06$\pm$14.9e-5}  \\
    \bottomrule
  \end{tabular}
\end{table*}
\begin{table*}[t]
  \caption{Topography.}
  \label{tbl:topo}
  \centering
  \begin{tabular}{lllllll}
    \toprule
    Name            & SVGP     & VNNGP & SKI&Vecchia&\emph{gp2Scale}~w/\eqref{eq:wendconv}&\emph{gp2Scale}~w/\eqref{eq:bump_kernel}\\
    \midrule
    RMSE            &  266.5 $\pm$ 2.1 &  236.0 $\pm$ 0.1 & 206.3 $\pm$ 0.1  & 150.0$\pm$0.2 & 136.3$\pm$0.17 & \textbf{136.1$\pm$1.03}\\
    CRPS            &  148.0 $\pm$ 3.0 &  175.5 $\pm$ 0.1 & 117.3 $\pm$ 0.3  & 78.4$\pm$0.10 & \textbf{63.8$\pm$0.81}  & 74.6$\pm$2.96\\
    \bottomrule
  \end{tabular}
\end{table*}
\begin{table*}[t]
  \caption{8-Dim. CA Housing.}
  \label{tbl:ca}
  \centering
  \begin{tabular}{lllllll}
    \toprule
    Name            & SVGP     & VNNGP & SKI & Vecchia & \emph{gp2Scale}~w/\eqref{eq:wendland} \\
    \midrule
    RMSE            &  0.60 $\pm$ 3.0e-4 &  0.66 $\pm$ 4.0e-4 & 0.71 $\pm$ 1.9e-4  &   0.60$\pm$0.15 &  \textbf{0.49$\pm$2.6e-3} \\
    CRPS            &  0.35 $\pm$ 1.1e-4 &  0.41 $\pm$ 1.9e-4 & 0.50 $\pm$ 2.6e-4  &   0.31$\pm$0.10 &  \textbf{0.27$\pm$9.5e-3} \\
    \bottomrule
  \end{tabular}
\end{table*}
\begin{table*}[t!]
  \caption{MNIST Dataset.}
  \label{tbl:nist}
  \centering
  \begin{tabular}{lllllll}
    \toprule
    Name            & SVGP     & VNNGP   & SKI     & Vecchia & \emph{gp2Scale}~w/$k_{\mathcal{W}} k_{d}$ \\
    \midrule
    BRIER            &  0.033 $\pm$ 2.8e-5 &  0.052 $\pm$ 2.9e-3 & NA  &   NA    &   \textbf{0.018$\pm$0.002}  \\
    \bottomrule
  \end{tabular}
\end{table*}
\begin{table*}[t!]
  \caption{3-Dim. Temperatures.}
  \label{tbl:temps}
  \centering
  \begin{tabular}{lllllll}
    \toprule
    Name            & SVGP     & VNNGP & SKI & Vecchia & \emph{gp2Scale}~w/\eqref{eq:wendland} \\
    \midrule
    RMSE            &  5.90  &  5.21  & NA  &   2.8602 &  \textbf{2.8509} \\
    \bottomrule
  \end{tabular}
\end{table*}

\subsection{60,000 MNIST Handwritten Digits}
For this test, we extracted 70,000 handwritten digits from the MNIST dataset (http://yann.lecun.com/exdb/mnist/), provided as 28 by 28 pixel arrays, and randomly divided the set into 60,000 training samples and 10,000 test samples. For simplicity, we turn this classification problem into a regression problem of the function $f(x)=\Pr(y=5)$. The labels are then 1 if the digit value is 5 and 0 otherwise. Although this is somewhat of a departure from common practice, treating this as a regression problem of a probability despite it being a multi-class classification, highlights the scalability and agnosticism to the input set of the proposed methodology. As a performance measure, we are using the well-established Brier score. For this test, we skipped Vecchia because it would have taken a substantial revamp of the existing package to work with this dataset. The Vecchia R package (https://github.com/katzfuss-group/GPvecchia) is tailored for spatial statistics. The combination of variational inference, inducing points, and a notion of neighboring points led to poor performance for VNNGP. SVGP performed competitively.
SKI cannot be applied to this example due to the size of the $28 \times 28$-dimensional local grid. \emph{gp2Scale} is an exact GP, which means there are no restrictions on the type of input space. For the \emph{gp2Scale} run, we used the kernel $k_{\mathcal{W}} k_{d}$ --- the product of kernels \eqref{eq:wendland} and \eqref{eq:delta_kernel}. For all runs, we used the $l_1$ norm as the distance metric. Distances are well-known to collapse to a narrow range in high-dimensional spaces; it is therefore important to plot the distributions of the pairwise distances and set sampling ranges appropriately. See Table \ref{tbl:nist} for the results. 

\subsection{3-Dimensional Temperature Dataset with 10 Million Points}
The last example is specifically designed to illustrate the scaling potential of \emph{gp2Scale}. The training dataset comprises 10 million measured temperatures across the United States, spanning approximately 10 years \citep{Menne2012}. We ran this experiment on 1024 A100 GPUs on NERSC's Perlmutter supercomputing system. Within the available computing time, we managed to run circa 500 MCMC iterations on a 1-million-data-point representative subset of the data and 100 MCMC iterations on the full dataset, leading to a well-performing but not yet optimal model; however, we obtained competitive results, beating the best competitor (Vecchia) by a slight margin nonetheless. This test demonstrates that truly massive exact GPs are possible. One MCMC iteration took 477 seconds to complete. From this, we can deduce that a full run from scratch might take about a week of runtime. Although this might sound like a long time, it is in line with the training times of some large neural networks or LLMs. When a highly customizable exact GP is needed for a large data set, \emph{gp2Scale} can deliver superior performance when computing time and resources are available. The results are summarized in Table \ref{tbl:temps}. For this dataset, we are only reporting the RMSE of one execution due to computing resource limitations.

\section{Discussion and Conclusion}
In this manuscript, we propose a new methodology, termed \emph{gp2Scale}, for scaling exact Gaussian process regression up to (and possibly beyond) 10 million data points. We have shown that the method performs competitively with state-of-the-art approximation methods. At the core of the methodology lies the assumption that GPs are not naturally dense; rather, standard kernels impose density on the covariance matrix, leading to the well-known scaling challenges. Flexible, non-stationary, and compactly supported kernels, in contrast, allow the GP to discover sparsity naturally. This stands in contrast to competing approximate methods, in which sparsity in the covariance or precision matrix is induced by user-specified choices, such as the number of inducing points or neighbors. The main advantage of our method is that the GP remains exact, allowing for superior prediction performance in many cases, but more importantly, imposes absolutely no restrictions on user-required GP customizations. The proposed kernels can all be viewed as masks, enabling sparsity to be discovered, given a user-defined "core" kernel. 

However, we see \emph{gp2Scale} not as a blanket solution but as a part of a practitioner's arsenal when tackling large-scale GPs. 
We acknowledge that approximate methods perform remarkably well in certain situations at low computational cost.  
The Vecchia approximation, for instance, was hard to beat on the California Housing dataset. The dataset is relatively high-dimensional, leading to sparsely distributed data points and making it challenging to discover naturally occurring sparsity in the covariance structure. In addition to the data scarcity, non-stationarity plays only a minor role, limiting the value of our methodology. VNNGP can be seen as a special case of Vecchia and shows mixed performance. SVGP often oversmooths, resulting in subpar performance. SKI had similar issues in our tests.

As a summary of our tests, we recommend using approximate methods when time and hardware availability are limited. For simple functions, inducing point methods are hard to beat, while in a spatial context without significant non-stationarity, Vecchia approximations stood out. \emph{gp2Scale} is best used for sophisticated, highly customizable GPs on densely sampled, non-stationary functions.

\section*{Author Contributions}
\anonymized{M.M.N.: Ideation, Kernel derivation, Data curation, Performance comparisons, gp2Scale software development, Manuscript; M.D.R.: Ideation, Kernel derivation, Data curation, Performance comparisons, gp2Scale software development, Manuscript; H.L.: Ideation, Kernel derivation; V.T.: Data curation, Performance comparisons, Manuscript; R.P.: gp2Scale software development}

\section*{Acknowledgments}
This work was supported by
\begin{itemize}
    \item The Center for Advanced Mathematics for Energy Research Applications (CAMERA), which is jointly funded by the Advanced Scientific Computing Research (ASCR) and Basic Energy Sciences (BES) within the Department of Energy’s Office of Science, under Contract No. DE-AC02-05CH11231.
    \item The U.S. Department of Energy, Office of Science, Office of Advanced Scientific Computing Research's Applied Mathematics Competitive Portfolios program under Contract No. AC02-05CH11231.
    \item This research was supported in part by the U.S. Department of Energy, Office of Science, Office of Advanced Scientific Computing Research's Applied Mathematics program under Contract No. DE-AC02-05CH11231 at Lawrence Berkeley National Laboratory; and U.S. National Science Foundation NSF-DMS 2412403 at Rice University.
    \item The Director, Office of Science, Office of Biological and Environmental Research of the U.S. Department of Energy under the Regional and Global Model Analysis program and the CASCADE Scientific Focus Area (Contract No. DE-AC02-05CH11231). 
\end{itemize}

This research used resources of the National Energy Research Scientific Computing Center (NERSC), a U.S. Department of Energy Office of Science User Facility located at Lawrence Berkeley National Laboratory, operated under Contract No. DE-AC02-05CH11231, using NERSC awards ERCAP0031656,  ERCAP0032615 and ERCAP0032229.

\section*{Reproducibility statement}
To ensure full reproducibility, we created a GitHub repository containing all the code and instructions to reproduce the results \url{https://github.com/MarcusMNoack/gp2Scale}. All data are publicly available, and a link is included in the repository. 

\section*{Conflict of Interest Disclosure}
The authors declare no conflicts of interest or ethics violations. 

\section*{Impact Statement}
This paper presents work whose primary goal is to advance scalable Gaussian process methodology for scientific machine learning. The societal consequences are largely those associated with progress in probabilistic modeling more broadly; however, a few points merit specific attention.
The most direct positive impact of gp2Scale is its preservation of rigorous, calibrated uncertainty quantification (UQ) at scales previously accessible only to approximate methods. In scientific domains --- climate modeling, geospatial analysis, materials discovery, autonomous experimentation --- decisions informed by GP posteriors are only as trustworthy as the UQ those posteriors provide. By enabling exact GPs at the scale of millions of observations, this work reduces the risk of overconfident predictions that can arise from variational or inducing-point approximations, with potential downstream benefits for data-driven scientific decision-making.
A potential concern is computational cost. Scaling to 10 million data points required 1024 A100 GPUs on a national supercomputing facility, and a full production run is estimated at approximately one week of wall-clock time. This energy and resource footprint should not be dismissed. We note, however, that gp2Scale is not intended to replace efficient approximate methods for routine tasks; rather, it is positioned as a high-fidelity tool for situations where exactness and customizability are scientifically necessary --- analogous to the role of high-fidelity simulation in computational science. Users should weigh this trade-off deliberately.
The software is released as part of the open-source gpCAM package, making these capabilities broadly accessible beyond institutions with dedicated HPC resources, albeit with the practical constraint that large-scale runs will require commensurate hardware.

\clearpage

\bibliographystyle{icml2026}
\bibliography{literature}

\clearpage

\appendix

\section{Used Theorems} \label{app:A}
\begin{theorem}\label{theorem:ff}
Let $k(\mathbf{x}_1,\mathbf{x}_2)$ be a valid kernel, then $f(\mathbf{x}_1)f(\mathbf{x}_2)k(\mathbf{x}_1,\mathbf{x}_2)$ is also a valid kernel. Here, $f(\mathbf{x})$ is an arbitrary function over the input set.
\end{theorem}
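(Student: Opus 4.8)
The plan is to verify directly the two defining properties of a valid kernel --- symmetry and positive semi-definiteness --- for the product $\tilde{k}(\mathbf{x}_1,\mathbf{x}_2) := f(\mathbf{x}_1)f(\mathbf{x}_2)k(\mathbf{x}_1,\mathbf{x}_2)$, working in the real-valued setting that the paper uses throughout (bump functions, signal standard deviations, and so on). Symmetry I would dispatch immediately: $k$ is symmetric and scalar multiplication commutes, so $\tilde{k}(\mathbf{x}_1,\mathbf{x}_2) = f(\mathbf{x}_1)f(\mathbf{x}_2)k(\mathbf{x}_1,\mathbf{x}_2) = f(\mathbf{x}_2)f(\mathbf{x}_1)k(\mathbf{x}_2,\mathbf{x}_1) = \tilde{k}(\mathbf{x}_2,\mathbf{x}_1)$.

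For positive semi-definiteness, the key step is a change of variables in the quadratic form. Fix any finite collection of inputs $\mathbf{x}_1,\dots,\mathbf{x}_n$ and any real coefficients $c_1,\dots,c_n$, and set $d_i := c_i f(\mathbf{x}_i)$. Then
\[
\sum_{i,j} c_i c_j\, \tilde{k}(\mathbf{x}_i,\mathbf{x}_j) = \sum_{i,j} \bigl(c_i f(\mathbf{x}_i)\bigr)\bigl(c_j f(\mathbf{x}_j)\bigr) k(\mathbf{x}_i,\mathbf{x}_j) = \sum_{i,j} d_i d_j\, k(\mathbf{x}_i,\mathbf{x}_j) \ge 0,
\]
the last inequality being exactly the hypothesis that $k$ is positive semi-definite, now applied to the coefficient vector $(d_1,\dots,d_n)$. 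Since every Gram matrix built from $\tilde{k}$ is therefore positive semi-definite, $\tilde{k}$ is a valid kernel. An equivalent route, which I would mention as a remark, is to observe that $(\mathbf{x}_1,\mathbf{x}_2)\mapsto f(\mathbf{x}_1)f(\mathbf{x}_2)$ is itself a rank-one valid kernel and then invoke the Schur product theorem (the entrywise product of positive semi-definite matrices is positive semi-definite), since any Gram matrix of $\tilde{k}$ is the Hadamard product of a Gram matrix of $k$ with a Gram matrix of that rank-one kernel.

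Honestly, there is no substantive obstacle here: the statement is essentially a one-line consequence of the substitution $d_i = c_i f(\mathbf{x}_i)$. The only points requiring any care are (i) noting that no regularity of $f$ --- continuity, boundedness, measurability --- is needed, since the argument touches only finitely many values $f(\mathbf{x}_i)$ at a time, which is precisely what makes the ``arbitrary function'' clause legitimate; and (ii) flagging that in a complex-valued setting one would replace $f(\mathbf{x}_1)f(\mathbf{x}_2)$ by $\overline{f(\mathbf{x}_1)}f(\mathbf{x}_2)$, though this is irrelevant for the real-valued kernels used in the paper.
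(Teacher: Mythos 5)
Your proof is correct and follows essentially the same route as the paper's: the substitution $d_i = c_i f(\mathbf{x}_i)$ is exactly the paper's step of absorbing the factors $f(\mathbf{x}_i)f(\mathbf{x}_j)$ into the coefficients of the quadratic form before invoking the positive semi-definiteness of $k$. Your additional remarks on symmetry, the Schur-product alternative, and the lack of any regularity requirement on $f$ are accurate but not needed beyond the core argument.
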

\begin{proof} Since $k$ is a valid kernel, 
$\sum_i^N\sum_j^N~ c_i~c_j~k(\mathbf{x}_i,\mathbf{x}_j)~\geq~0 ~\forall N,~\mathbf{x}\in~\mathbb{R}^N, ~\mathbf{c}\in~\mathbb{R}^N$ \\
$\Rightarrow \sum_i^N\sum_j^N~ f_i~f_j~c_i~c_j~k(\mathbf{x}_i,\mathbf{x}_j) \\
\geq~0 ~\forall N,~\mathbf{x}\in~\mathbb{R}^N, ~\mathbf{c}\in~\mathbb{R}^N ~\mathbf{f}\in~\mathbb{R}^N$ \\
$\Rightarrow \sum_i^N\sum_j^N~c_i~c_j~f(\mathbf{x}_i)~f(\mathbf{x}_j)~k(\mathbf{x}_i,\mathbf{x}_j)~\geq~0~\forall N,~\mathbf{x}\in~\mathbb{R}^N$
\end{proof}

\section{Codes, Computing Architecture and Compute Times}
We used PyTorch implementations for SVGP(\url{https://docs.gpytorch.ai/en/latest/variational.html}) and SKI(\url{https://docs.gpytorch.ai/en/v1.6.0/examples/02_Scalable_Exact_GPs/KISSGP_Regression.html}), and the VNNGP implementation following the example of GPyTorch (\url{https://docs.gpytorch.ai/en/v1.13/examples/04_Variational_and_Approximate_GPs/VNNGP.html}). 
The Vecchia R package we used for the first two tests can be found here \url{https://cran.r-project.org/web/packages/GpGp/index.html}. All other Vecchia tests were run with code found here \url{https://github.com/katzfuss-group/scaledVecchia/tree/master}.
\emph{gp2Scale} is implemented as part of the \anonymized{\emph{gpCAM}, \url{https://gpcam.lbl.gov/}} open-source Python package. 
SVGP, SKI, and VNNGP were all trained with the Adam optimizer. Vecchia is trained with MLE-II; gp2Scale is always trained via MCMC for assessment and propagation of hyperparameter uncertainties. While this might lead to some discrepancies in method comparisons, these mechanisms are baked into the software packages, and the observed performance differences are unlikely to be caused by them. If this were the case, it should be seen as a strength of methods compatible with MCMC. 

\subsection{1-Dim. Synthetic}
The SVGP, SKI, and VNNGP for the one-dimensional test were run on the T4 GPU on Google Colab, which is equipped with an Intel Xeon CPU with two vCPUs (virtual CPUs) and 13GB of RAM, and one T4 GPU. 
SVGP was run with 10 inducing points, VNNGP used 2000 inducing points (full training dataset) and 50 neighbors, and SKI used 20 local grid points. The Vecchia approximation code was run on a single core of a local 24-core node with 128 GB of shared RAM. Total run time was 14 seconds (11.4 seconds for training and 2.6 seconds for predictions at the test points). We used the default settings of 30 conditioning points per data point. \emph{gp2Scale} was run on a single-node Intel Core i9-9900KF CPU. 
The computation time for all methods was on the order of minutes.

\subsection{Topography}
The SVGP, SKI, and VNNGP for the topography test were run on the T4 GPU on Google Colab, which is equipped with an Intel Xeon CPU with two vCPUs (virtual CPUs) and 13GB of RAM, and one T4 GPU. 
SVGP was run with 100 inducing points, VNNGP used 20,000 inducing points (full training dataset) and 50 neighbors, and SKI used 30 local grid points per dimension.
Vecchia ran on a single core of a local 24-core node with 128 GB of shared RAM. Total run time was 494 seconds (442 seconds for training and 52 seconds for predictions at the test points). We used the default settings of 30 conditioning points per data point.
\emph{gp2Scale} was run on 15 A100 GPUs and ran in about 1 hour. 
The approximate methods ran in about 15 minutes each.

\subsection{8-Dim. CA Housing}
The SVGP, SKI, and VNNGP for the housing test were run on the T4 GPU on Google Colab, which is equipped with an Intel Xeon CPU with two vCPUs (virtual CPUs) and 13GB of RAM, and one T4 GPU. The approximate methods ran in 30 minutes to about an hour. 
SVGP was run with 100 inducing points; VNNGP used 20,000 inducing points (full training dataset) and 50 neighbors; and SKI used 30 local grid points per dimension (an additive kernel due to the dimensionality).
Vecchia ran on a single core of a local 24-core node with 128 GB of shared RAM. Total run time was 349 seconds (334 seconds for training and 15 seconds for predictions at the test points). We used the default settings of 30 conditioning points per data point.
\emph{gp2Scale} was run on 15 A100 GPUs and ran in about 1-4 hours (based on the number of MCMC iterations).

\subsection{MNIST Dataset}
The SVGP, SKI, and VNNGP for the MNIST test were run on a dedicated NERSC Perlmutter node, which is equipped with an Intel 2x AMD EPYC 7763  CPU. The approximation codes did not utilize the GPU. The approximate methods ran in about 2 hours. 
SVGP used 500 inducing points. A larger number decreased prediction accuracy. VNNGP used 20,000 inducing points (a third of the dataset) and 1,000 neighbors, where decreasing the number of inducing points resulted in the method reverting to mean prediction.
\emph{gp2Scale} was run on 15 A100 GPUs and ran in about 1-4 hours (based on the number of MCMC iterations).

\subsection{3-Dim. Temperatures.}
The SVGP, SKI, and VNNGP for the 3-Dim.-Temperatures test were run on a dedicated NERSC Perlmutter node, which is equipped with an Intel 2x AMD EPYC 7763  CPU. The approximation codes did not utilize the GPU. 
SVGP ran in about 4 hours. 
SVGP was run with 300 inducing points (larger numbers exceeded RAM threshold). SKI ran out of memory even with only 4 grid points per dimension. VNNGP used 10,000 inducing points with 1,000 neighbors, and took approximately 2 hours. Vecchia ran on a single core of a local 104-core node with 1 TB of shared RAM. Total run time was 5.2 hours (most for training and less than one minute for predictions at the test points). We used 10 conditioning points per data point to minimize computational time.
\emph{gp2Scale} was run on 1024 A100 GPUs, which led to an execution time of about 477 seconds per MCMC iteration. We expect a full run to use about 1000 MCMC iterations. This is because the hyperparameters have physical meaning and can be initialized quite close to their final values.

\subsection{Compute Times Across Different Problem Sizes}\label{A:comp_times}
Computing times of one MCMC iteration (covariance calculation, MINRES solve, and log-det calculation) for the kernel in Eq. \eqref{eq:bump_kernel}
and a generic synthetic dataset on 16 A100 GPUs. This demonstrates that, if sufficient sparsity is discovered, the calculation of the covariance matrix is the most expensive operation of the compute pipeline, especially as problem size increases. 
\begin{table}[h!]
\centering
\scriptsize
\begin{tabular}{ccccccc}
\toprule
Dataset Size & Sparsity & Covariance & MINRES & LOGDET & Total \\
\midrule
50000 & 2.01E-05  & 1.11905 & 0.00666 & 0.92695 & 2.05560 \\
\midrule
50000 & 4.02E-05  & 1.10673 & 0.05059 & 1.09341 & 2.25396 \\
\midrule
50000 & 2.66E-04  & 1.13132 & 0.08509 & 0.96690 & 2.19455 \\
\midrule
100000 & 1.21E-05 & 2.03161 & 0.06763 & 0.97226 & 3.07747 \\
\midrule
100000 & 7.15E-05 & 2.03520 & 0.03350 & 0.92840 & 3.01514 \\
\midrule
100000 & 9.50E-05 & 2.02798 & 0.10975 & 0.98395 & 3.14211 \\
\midrule
200000 & 6.92E-06 & 10.1834 & 0.01366 & 0.92315 & 11.1279 \\
\midrule
200000 & 9.75E-05 & 10.2132 & 0.01185 & 0.92963 & 11.1715 \\
\midrule
200000 & 6.40E-04 & 10.9462 & 6.97306 & 1.06100 & 18.9802 \\
\bottomrule
\end{tabular}
\end{table}

\subsection{Distributed Computing Pipeline}\label{A:comp}
The distributed-computing pipeline is shown in Figure \ref{fig:comp_pipe}. We provide a detailed, practical explanation here. The computational process starts by dividing the datasets into a number of subsets of a given size that depends on the distributed compute architecture. The distributed computing is handled via Python's DASK. The data subset size per worker depends on the compute resources each one has available and the particular kernel definition. The worker will use the kernel to calculate a block of the covariance matrix. Maximum RAM usage should not exceed the available RAM per worker and should optimally utilize the available compute. For example, the A100 GPUs we run many of our models on are optimally utilized at a block size of 15000 points. This number can often be smaller for CPU workers. Once the dataset is distributed to the workers, each worker calls the kernel function and computes the corresponding block of the covariance matrix. Still on the worker, the matrix block is cast into the sparse COO format to minimize communication burden between the worker and host nodes. Once all blocks are computed, the full sparse covariance matrix is assembled on the host node. Sparse linear algebra follows and may use the same workers. A user should ensure that there are at least as many blocks to compute as there are GPUs available, for optimal hardware utilization. When more blocks have to be computed than GPUs are available --- this is the most common case --- work will be optimally scheduled automatically. The object $K^{-1} \mathbf{y}$, where $K$ is the covariance matrix, is stored for fast posterior mean computation. The described process is repeated for each MCMC iteration for the training.
\begin{figure*}
    \centering
    \includegraphics[width=\linewidth]{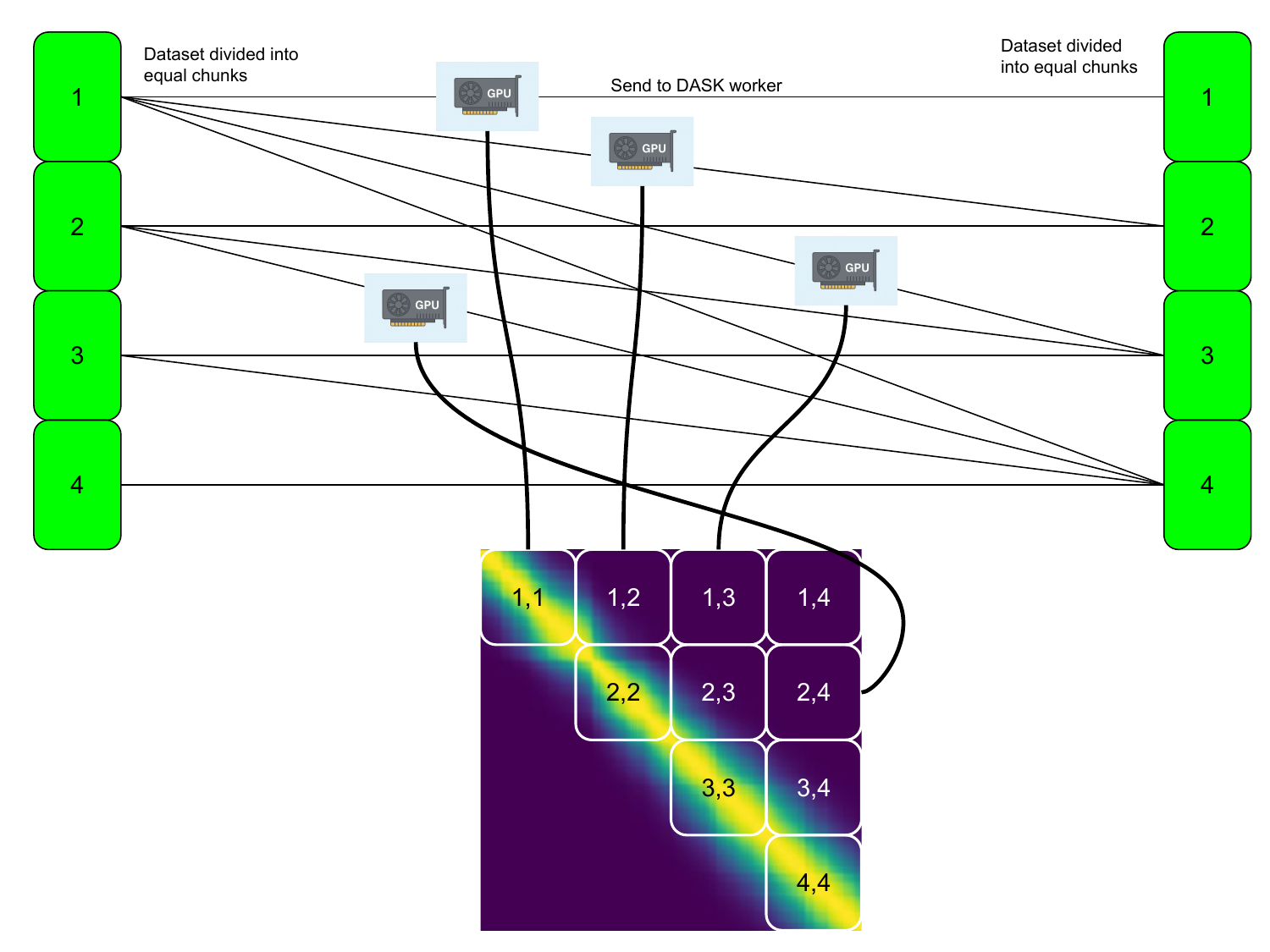}
    \caption{Compute pipeline. The dataset is divided into approximately equal chunks. Those chunks are sent to (DASK) compute workers, which calculate a dense square block of the covariance matrix. The block will be cast to sparse COO format before being shipped back to the host, where all blocks will be collected and assembled into the full sparse covariance matrix. This simple pipeline allows us to calculate truly massive covariance matrices in a reasonable amount of time. The kernels presented in this paper allow the discovery of sparsity in the covariance matrix, enabling downstream operations to be comparably cheap.}
    \label{fig:comp_pipe}
\end{figure*}

\section{More Information on Kernels}
\subsection{Covariance Matrix Illustration} \label{A:cov_heatmap}
\begin{figure}[H]
    \centering
    \includegraphics[width=\linewidth]{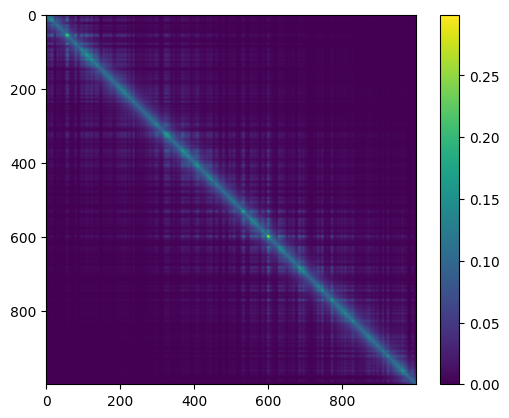}
    \caption{Graphical illustration of the covariance matrix for a one-dimensional problem using kernel \eqref{eq:bump_kernel} for $U=1$ and $k_{core} = 1$, In particular, the far-field term $g(\mathbf{x}_i) g(\mathbf{x}_j)$ has rank 1 and therefore the associated interactions can only be turned ``on'' or ``off'' which is highlighted by a box pattern in the covariance matrix}
    \label{fig:cov_matrix}
\end{figure}

\subsection{Combination Kernel} \label{A:comb_kernel}
Here, we create a combination kernel from the core kernel
\begin{align} \label{eq:climate_core}
    k_{core}(\mathbf{x}_i,\mathbf{x}_j) = & \sigma(\mathbf{x}_i) \sigma(\mathbf{x}_j) \frac{\left|\boldsymbol{\Sigma}(\mathbf{x}_i)\right|^{1/4}\left|\boldsymbol{\Sigma}(\mathbf{x}_j)\right|^{1/4}}{\left|\frac{\boldsymbol{\Sigma}(\mathbf{x}_i) + \boldsymbol{\Sigma}(\mathbf{x}_j)}{2} \right|^{1/2}} \nonumber \\
    &k_{\mathcal{M}}\left({\sqrt{Q(\mathbf{x}_i,\mathbf{x}_j)}}\right),
\end{align}
where $k_{\mathcal{M}}$ is any kernel of the Mat\'ern class. Combining this kernel as shown in Equation \ref{eq:bump_kernel} will lead to a vanishingly small influence of the bump functions. Instead
\begin{align}
    &k(\mathbf{x}_i,\mathbf{x}_j) = \frac{1}{2}\sigma(\mathbf{x}_i) \sigma(\mathbf{x}_j) \nonumber \\
    &\Big( \frac{\left|\boldsymbol{\Sigma}(\mathbf{x}_i)\right|^{1/4}\left|\boldsymbol{\Sigma}(\mathbf{x}_j)\right|^{1/4}}{\left|\frac{\boldsymbol{\Sigma}(\mathbf{x}_i) + \boldsymbol{\Sigma}(\mathbf{x}_j)}{2} \right|^{1/2}} k_{\mathcal{W}}(\sqrt{Q(\mathbf{x}_i,\mathbf{x}_j)}) +  \nonumber \\
    &\frac{\left|\boldsymbol{\Phi}(\mathbf{x}_i)\right|^{1/4}\left|\boldsymbol{\Phi}(\mathbf{x}_j)\right|^{1/4}}{\left|\frac{\boldsymbol{\Phi}(\mathbf{x}_i) + \boldsymbol{\Phi}(\mathbf{x}_j)}{2} \right|^{1/2}} k_{\mathcal{M}}(\sqrt{P(\mathbf{x}_i,\mathbf{x}_j)}) \sum_a g^a(\mathbf{x}_i) g^a(\mathbf{x}_j) \Big),
\end{align}
where $P$ is the equivalent of $Q$ but with potentially different hyperparameters, and $\boldsymbol{\Phi}$ is equivalent to $\boldsymbol{\Sigma}$ defined in Equation \eqref{eq:wendconv}, will allow both terms to stay influential. 

\subsection{Strategies for Kernel Design}
Choosing a kernel can be a daunting task for standard GPs, and this is only aggravated for non-stationary compactly supported kernels proposed here. We want to provide some recommendations for that choice. For advanced kernel designs, it is often a good idea to start simple and move to a more complex kernel step by step, while observing changes in sparsity, log marginal likelihood, and prediction performance as the change unfolds. That means, we often start with a stationary anisotropic Wendland kernel (anisotropic version of \eqref{eq:wendland}), and then add increasingly expressive non-stationarity in kernel \eqref{eq:wendconv}. While the bump-function extension seems appealing, it is only worth considering when far-field interactions are suspected. When in doubt, this can be tested by allowing a few bumps, limiting the number of extra hyperparameters. If an improvement in log marginal likelihood is observed, more bump functions can be added. The MCMC prior can be used to automatically regularize the kernel and remove unnecessary bump functions. For discrete input spaces, such as the MNIST dataset, collapsing the bumps into deltas at the inputs can yield a more efficient mask, since the analytical shape of the bumps may impose a correlation structure that is inappropriate for discrete inputs rather than one discovered via optimality or sampling. The product kernel we defined, $k_{\mathcal{W}} k_{d}$, yields greater numerical robustness and improved approximations due to the smoothness imposed by the Wendland kernel. In short, starting simple --- a stationary Wendland --- and increasing complexity towards a full, non-stationary bump-based model is recommended and must be done cautiously while monitoring performance metrics. We want to demonstrate this strategy for each experiment
\begin{itemize}
    \item \textbf{Exp. 1, 1d Synthetic Function:} The input space is a one-dimensional Euclidean space, and the latent function is smooth. We suspect no far-field interaction in this case, so we exclude any bump functions from the kernel. However, the test was set up with strong non-stationarity built in. This leads us to choosing kernel \eqref{eq:wendconv}. That kernel is so diverse that we expect a majority of GPs could use it. It is certainly a good starting point for many situations in Euclidean settings.
    \item \textbf{Exp. 2, Topography:} The input space is a two-dimensional Euclidean space. The latent function is assumed to be smooth. Topography comprises strong non-stationarity and non-proximity-related far-field interactions that can play a substantial role. This led us to testing two kernels \eqref{eq:wendconv} and \eqref{eq:bump_kernel}.
    \item \textbf{Exp. 3, CA Housing:} The housing dataset is defined on an eight-dimensional Euclidean space. The top priority was to keep the number of hyperparameters low. Scarcity in the dataset makes it extremely difficult to identify bump-function locations. In this scenario, simplicity is the primary objective, which leads us to use the kernel \eqref{eq:wendland}.
    \item \textbf{Exp 4, MNIST:} The MNIST dataset is defined on a set of handwritten digits. We defined the $l_1$ norm as a distance measure. Yet again, in such an abstract and high-dimensional space, simplicity and sparsity are key. We targeted a flexible and efficient mask via the collapsed bumps (deltas), while ensuring smoothness, a favorable spectral density, and stable computation through the product with the Wendland kernel. 
    \item \textbf{Exp 5, 3d Climate} This dataset is defined on a three-dimensional Euclidean dataset. Because of the size and the associated covariance computation time, we wanted to keep the number of MCMC iterations to a minimum, which led us to keep the number of hyperparameters small. Yet again, simplicity was key. However, this is a prime example in which, in production mode, a more complex kernel could have improved overall approximation performance. This is the very example where future work will continue with a fully equipped bump-function kernel and additional computing resources. 
\end{itemize}

\subsection{Intuition on the Number of Bump Functions} \label{A:number_of_bumps}
Kernel \eqref{eq:bump_kernel} has a practical shortcoming: the choice of the number of terms in the sums. More specifically,
the far-field kernel $\sum_u^U g_u(\mathbf{x}_i) g_u(\mathbf{x}_j)$, where $g_u(\mathbf{x})=\sum_p^P b_u(\mathbf{x}, \mathbf{x}_{p})$ needs the specification of $U$ and $P$. For intuition, imagine dividing the dataset into many subsets. Now, one might group the subsets by the covariances of their data; subset pairs with large cross-covariances are grouped together. $P$ can now be interpreted as the number of subsets in each group, and $U$ as the total number of those groups. $U=1$ leads to a rank-1 far-field term, which means data-subsets that are within support will see the same covariance contribution. $U=2$ creates small groups of data subsets that co-vary similarly. If a third area covaries, the covariance structure can be achieved by summing over $u$. Furthermore, $P$ controls sparsity: as $P$ approaches $|\mathcal{D}|$, the size of the dataset, sparsity disappears. $U$ controls the rank of the far-field Gram matrix: if we need to approximate complicated functions or many orthogonal modes, we have to increase $P$. We want to remind the reader that GP training should start with all bumps disabled; they will be enabled only when a beneficial impact on the log marginal likelihood is detected.

\section{Training via Block-MCMC}
The performance of the proposed algorithm is highly dependent on the setup of the block-MCMC. Our strategy is as follows. Hyperparameters of the kernel involving bumps are excluded from the block that samples the Wendland or core kernel hyperparameters. Within that block, we usually separate the amplitudes, radii, and positions. There is a trade-off to consider: a richer blocking strategy yields a faster stationary posterior; however, each additional block requires an additional log-likelihood evaluation per MCMC iteration. This tightly connects the MCMC to the kernel design and supports what we proposed there: starting with a simpler kernel and moving to a more complex one only when needed to increase sparsity and performance. The prior selection deserved another remark: it is acceptable to simply define a flat prior within well-chosen hyperparameter bounds; however, a well-chosen log-normal or Gamma prior over length scales or radii of the Wendland kernel and bump functions can improve sparsity substantially while leaving the final log-likelihood unaffected. Proposal distributions are kept at normal throughout our experiments.

\section{Visual Inspection of the \emph{gp2Scale} Solution for the Topography Dataset}
To offer the reader a more comprehensive way to visually study the posterior that \emph{gp2Scale} produces, beyond our one-dimensional example, we provide a plot of the predictions and uncertainty estimates on a finely sampled grid. 
\begin{figure*}
    \centering
    \includegraphics[width=\linewidth]{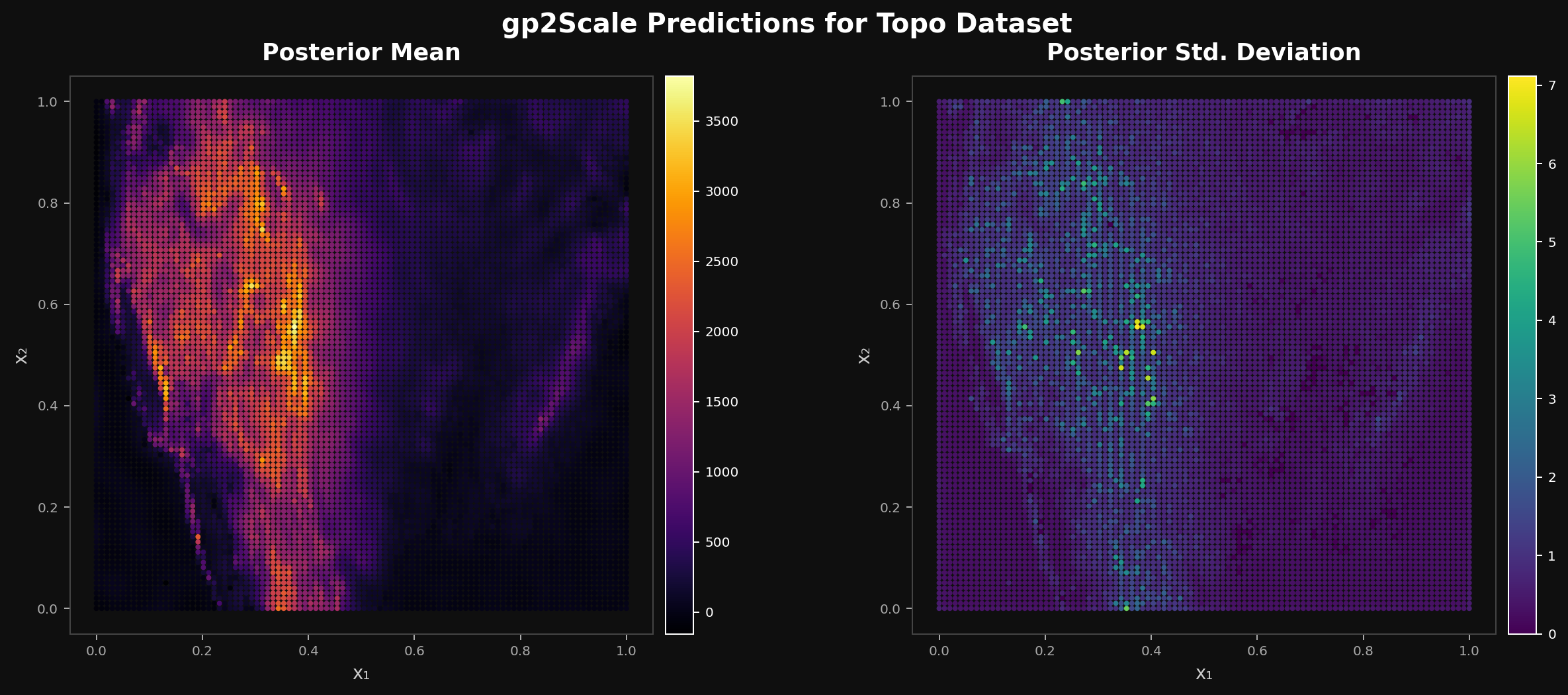}
    \caption{Posterior predictive distribution of \emph{gp2Scale} on the U.S. topography dataset. Posterior mean (left) and posterior standard deviation (right) evaluated on a 100 by 100 prediction grid using kernel \eqref{eq:wendconv}. The mean closely reproduces the large-scale elevation gradients of the Rocky Mountains and Appalachians while simultaneously resolving fine-scale local relief --- a direct consequence of the non-stationary, spatially adaptive length scales. The posterior standard deviation reflects the inductive bias of the kernel: uncertainty is elevated in data-sparse interior regions and near sharp topographic transitions, where the compactly supported kernel assigns a lower effective correlation range, and is suppressed in the densely sampled coastal plains. This structured, spatially heterogeneous uncertainty pattern --- rather than a uniform confidence band --- demonstrates that the kernel class induces a function prior that is sensitive to local function structure, data density, and smoothness, consistent with well-calibrated uncertainty quantification on a strongly non-stationary real-world surface.}
    \label{fig:topoplot}
\end{figure*}

\end{document}